\newtheorem{prop}{Proposition}
\newcommand{\MyMapTemplatePrefixc}[4]{\expandafter#1\csname#3#4\endcsname{#2{#4}}} 
\newcommand{\MyMapTemplatePrefixtb}[5]{\expandafter#1\csname#4#5\endcsname{#2{#3{#5}}}} 
\newcommand\footnoteref[1]{\protected@xdef\@thefnmark{\ref{#1}}\@footnotemark}
\newcommand{\MyMapTemplateNoPrefix}[3]{\expandafter#1\csname#3\endcsname{#2{#3}}}
\def\resp{\emph{resp.}\@\xspace}
\ificcvfinal\pagestyle{empty}\fi
\begin{document}

\title{Semi-Supervised Learning by Augmented Distribution Alignment}

\author{Qin Wang\textsuperscript{1},  Wen Li\textsuperscript{1\thanks{The corresponding author}}, Luc Van Gool\textsuperscript{1,2} \\
\textsuperscript{1}ETH Zurich  
\textsuperscript{2}KU Leuven \\
  \texttt{\small qwang@student.ethz.ch} 
  \texttt{\small\{liwen,vangool\}@vision.ee.ethz.ch}}

\maketitle
\ificcvfinal\thispagestyle{empty}\fi

\begin{abstract}
   In this work, we propose a simple yet effective semi-supervised learning approach called Augmented Distribution Alignment. We reveal that an essential sampling bias exists in semi-supervised learning due to the limited number of labeled samples, which often leads to a considerable empirical distribution mismatch between labeled data and unlabeled data. To this end, we propose to align the empirical distributions of labeled and unlabeled data to alleviate the bias. On one hand, we adopt an adversarial training strategy to minimize the distribution distance between labeled and unlabeled data as inspired by domain adaptation works. On the other hand, to deal with the small sample size issue of labeled data, we also propose a simple interpolation strategy to generate pseudo training samples. Those two strategies can be easily implemented into existing deep neural networks. We demonstrate the effectiveness of our proposed approach on the benchmark SVHN and CIFAR10 datasets. Our code is available at \url{https://github.com/qinenergy/adanet}. 
\end{abstract}


\section{Introduction}
\label{intro}
\begin{figure}
	\centering
	{\includegraphics[width=\columnwidth]{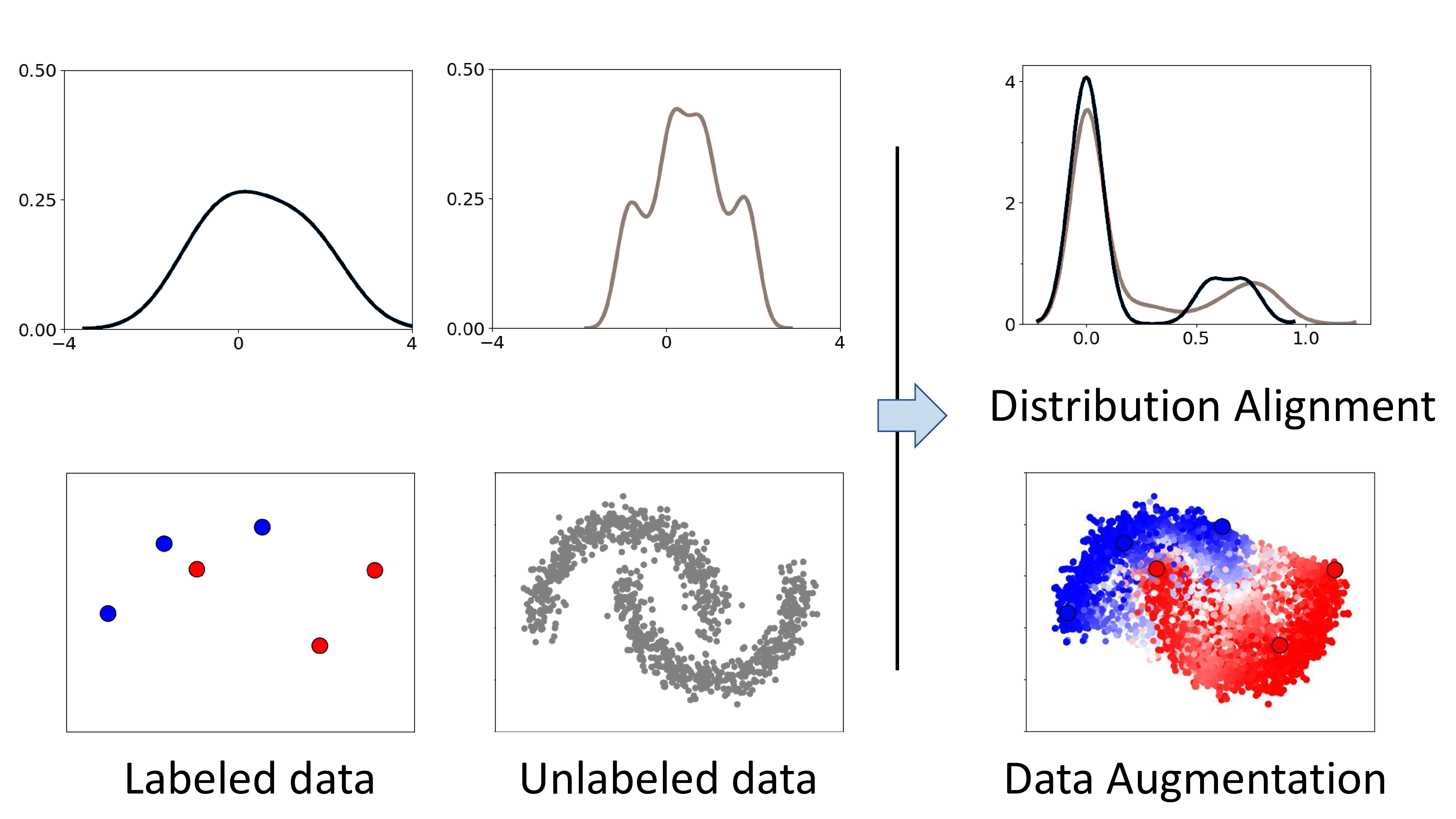}}
	\caption{Illustration of the empirical distribution mismatch between labeled and unlabeled samples with the two-moon data. The labeled and unlabeled samples are shown in the \textbf{bottom left} and \textbf{bottom middle} figures, and the kernel density estimations of  their x-axis projection are plotted in the \textbf{top left} and \textbf{top middle} figures, respectively. Our approach aims to address the empirical distribution mismatch by aligning sample distributions in the latent space (\textbf{top right}) and augmenting training samples with interpolation between labeled and unlabeled data (\textbf{bottom right}).}
	\label{fig:distmis}
	\vspace{-5pt}
\end{figure}
Semi-Supervised Learning (SSL) aims to learn a robust model with a limited number of labeled samples and a abundant number of unlabeled samples. As a classical learning paradigm, it has gained many interests from both machine learning and computer vision communities. Many approaches have been proposed in recent decades, including label propagation, graph regularization, \etc~\cite{chapelle2003cluster, blum1998combining, belkin2004semi, grandvalet2005semi, blum2001learning, zhu2005semi}. Recently, there is an increasing interest in training deep neural networks in the semi-supervised learning scenario\cite{lee2013pseudo, tarvainen2017mean, laine2016temporal, miyato2018virtual, odena2018realistic, cicek2018saas, Chen_2018_ECCV}. This is partially due to the data-intensive nature of the conventional deep learning techniques, which often impose heavy demands on data annotation and bring high cost.

While many strategies have been proposed to utilize the unlabeled data for boosting the model performance, the essential sampling bias issue in SSL has rarely been discussed in the literature. That is, \textbf{\emph{the empirical distribution of labeled data often deviates from the true samples distribution}}, due to the limited sampling size of labeled data. We illustrate this issue with the classical two-moon data in Figure~\ref{fig:distmis}, in which we plot $6$ labeled samples (bottom left) and $1,000$ unlabeled samples (bottom middle). It can be observed the two-moon structure is well depicted by the unlabeled samples. However, due to the randomness in sampling and the small sample size, it can hardly tell the underlying distribution with the labeled data, though it is also sampled from the same two-moon distribution. In terms of empirical distribution, this also leads to a considerable difference between labeled and unlabeled data, as shown by the density estimation results on their x-axis projection (top left and top middle). 

Similar empirical distribution mismatch is also observed in real world datasets for SSL (see Section~\ref{sec:ea}). As observed in domain adaptation works, the model performance can often be significantly degraded when applying on a sample set with considerable empirical distribution difference. Therefore, the SSL models could also be potentially affected by the empirical distribution mismatch between labeled and unlabeled data when exploiting different SSL strategies, \eg, label propagation from labeled data to unlabeled data. 

To tackle this issue, we propose to explicitly reduce the empirical distribution mismatch in SSL. Specifically, we develop a simple yet effective approach called Augmented Distribution Alignment. On one hand, we adopt the adversarial training strategy to minimize the distribution distance between labeled and unlabeled data, such that the feature distributions are well aligned in the latent space, as illustrated in the top right of  Figure~\ref{fig:distmis}. On the other hand, to alleviate the small sampling size issue and enhance the distribution alignment, we also propose a data augmentation strategy to generate pseudo samples by interpolating between labeled and unlabeled training sets, as illustrated in the bottom right of  Figure~\ref{fig:distmis}. It is also worth mentioning that both strategies can be implemented easily, where the adversarial training could be achieved with a simple gradient reverse layer, and the data augmentation can be implemented by interpolation. Thus, they can be readily incorporated into existing neural networks for SSL with little effort. We demonstrate the effectiveness of our proposed approach on the benchmark SVHN and CIFAR10 datasets, on which we achieve new state-of-the-art  classification performance.

Our contributions are summarized as follows:
\begin{itemize}
	\item We offer a new perspective of empirical distribution mismatch to understand semi-supervised learning. The empirical distribution mismatch problem commonly exists in SSL scenarios, however, has not been revealed by existing semi-supervised learning works. 
	\item We propose an augmented distribution alignment approach to explicitly address the empirical distribution mismatch for SSL.
    \item Our approach can be easily implemented 
    into existing neural networks for SSL with little efforts.
	\item Despite of the simplicity, our proposed approach achieves new state-of-the-art  classification performance on the the benchmark SVHN and CIFAR10 datasets for the SSL task.
\end{itemize}

\section{Related Work}

\textbf{Semi-supervised learning: } As a classical learning paradigm, many works have been proposed for semi-supervised learning with various methods, including label propagation, graph regularization, co-training, \etc\cite{yarowsky1995unsupervised, rosenberg2005semi, blum1998combining, mitchell2004role, grandvalet2005semi, blum2001learning, joachims2003transductive, ando2005framework}. We refer interested readers to \cite{zhu2005semi} for a comprehensive survey. Recently, there is an increasing interest in training deep neural networks in the semi-supervised learning scenario\cite{tarvainen2017mean, laine2016temporal, miyato2018virtual, odena2018realistic, cicek2018saas, Chen_2018_ECCV}. This is partially due to the data-intensive nature of the conventional deep learning techniques, which often impose heavy demands on data annotation and bring high cost. Different models have been designed for deep semi-supervised learning. For example, \cite{laine2016temporal, tarvainen2017mean,miyato2018virtual} proposed to add  small perturbations to unlabeled data, and enforce a consistency regularization~\cite{odena2018realistic} on the output of model. Other works~\cite{Chen_2018_ECCV,cicek2018saas} adopt the idea of self-training and used propagated labels with a memory module or regularized by training speed. The ensemble approach was also explored, where \cite{laine2016temporal} used an averaged prediction using the outputs of the network-in-training over time to regularize the model, while \cite{tarvainen2017mean} instead used accumulated parameters to for prediction. 

Different from above works, we tackle the SSL problem with a new perspective of empirical distribution mismatch, which was rarely discussed in the literature. By simply dealing with the distribution mismatch, we show that our newly proposed augmented distribution alignment with vanilla neural networks performs competitively with the state-of-the-arts SSL methods. Moreover, since we deal the SSL problem in a new way, 
our approach is potentially complementary to those approaches, and is shown to be able to further boost their performance.

\textbf{Sampling bias problem: } Sampling bias was usually discussed in the literature under the supervised learning and domain adaptation scenarios~\cite{rosset2005method,dudik2006correcting,huang2007correcting, das2018unsupervised}. Many works have been proposed to measure or address the sampling bias in the learning process~\cite{fernando2013unsupervised,ganin2014unsupervised,long2015learning, sun2015subspace}. Recently, following the generative adversarial networks~\cite{goodfellow2014generative}, the adversarial training strategy was widely used to address the empirical distribution mismatch in domain adaptation~\cite{ganin2014unsupervised, tzeng2017adversarial, yan2017learning, zhang2018collaborative, chen2018domain,gong2019dlow,chen2019learning}. Although people generally assume samples in two domains are sampled from two different distributions, while in SSL the labeled and unlabeled samples are from the identical distribution, the techniques for reducing domain distribution mismatch used in domain adaptation can be readily used to solve the empirical distribution mismatch in SSL. In this work, we employ the adversarial training strategy proposed in~\cite{ganin2014unsupervised}. 
A potential challenge as discussed in this paper is the small sample size of labeled data might lead to a lack of supports problem when aligning distribution, for which we additionally employ a sample augmentation strategy.

\textbf{Other related works:} Our work is also related to the recent proposed interpolation based data augmentation methods for training neural networks~\cite{zhang2017mixup,inoue2018data,verma2018manifold}.  In particular, the \textit{Mixup}  method~\cite{zhang2017mixup} proposed to generate new training samples using convex combinations of pairs of training samples and their labels. In order to address the small sample size issue when aligning distributions, we generalize their approach to the semi-supervised learning by using pseudo-labels for unlabeled samples in the interpolation process. Moreover, we also show that by interpolating between labeled and unlabeled data, the empirical distribution of generate data actually gets closer to the unlabeled samples.

\section{Problem Statement and Motivations}
In semi-supervised learning, we are given a small amount of labeled training samples and a large set of unlabeled training samples. Formally, let us denote by $\cD_l=\{(\x^l_{1}, y_{1}), \ldots, (\x^l_{n}, y_{n})\}$ as the set of labeled training data, where $\x^l_i$ is the $i$-th sample, $y_i$ is its corresponding label, and $n$ is the total number of labeled samples. Similarly, the set of unlabeled training data can be represented as $\cD_{u}=\{\x^{u}_1, \ldots, \x^u_m\}$ where $\x^u_i$ is the $i$-th unlabeled training sample, and  $m$ is the number of unlabeled samples. Usually  $n$ is a small number, and we have $m \gg n$. The task of semi-supervised learning is to train a classifier which performs well on the test data drawn from the same distribution with the training data.

\subsection{Empirical Distribution Mismatch in SSL}\label{sec:empd}
In semi-supervised learning, the labeled training samples $\cD_l$ and unlabeled training samples $\cD_u$ are assumed to be drawn from an identical distribution. However, due to the limited number of labeled training samples, a considerable difference of empirical distributions can often be observed between the labeled and unlabeled training samples. 

More concretely, we take the two-moon data as an example to illustrate the empirical distribution mismatch problem in Figure~\ref{fig:distmis}. In particular, the $1,000$ unlabeled samples well describe the underlying distribution (bottom middle), while the labeled samples can hardly represent the two-moon distribution (bottom left). This can be further verified by their distribution by projecting to the x-axis (upper left and upper middle), from which we observe an obvious distribution difference. Actually, when performing multiple rounds of sampling on labeled samples, the empirical distribution of labeled data varies significantly, due to the small sample number.

This phenomenon was also discussed as the sampling bias problem in the literature~\cite{gretton2012kernel,gretton2009fast, kamath2015learning}. In particular, Greton \etal~\cite{gretton2012kernel} pointed out that the difference between two samplings measured by Maximum Mean Discrepancy(MMD) depends on their sampling sizes. In semi-supervised learning where the underlying distribution of labeled and unlabeled data is assumed identical, the MMD of labeled and unlabeled data tends to vanish if and only if both sizes of two sampling are large, which is described as follows,
\begin{prop}
\label{th:1}
Let us denote $\mathcal{F}$ as a class of witness functions $f$ : $\x \rightarrow \mathcal{R}$ in the reproduced kernel Hilbert space (RKHS) induced by a kernel function $k(\dot, \dot)$, and assume $0\leq k(\dot, \dot)\leq K$, then the MMD distance of $\cD_l$ and $\cD_u$ can be bounded by $Pr\{ \text{MMD}[\mathcal{F}, \cD_l, \cD_u] > 2 (\sqrt{(K/n)} + \sqrt{(K/m)} +\epsilon)\} \leq 2\exp{\frac{-\epsilon^2nm}{2K(n+m)}}$
,\end{prop}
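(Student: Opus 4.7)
The plan is to follow the classical concentration argument for MMD from Gretton \emph{et al.}~\cite{gretton2012kernel}, treating the empirical MMD as a function of $n+m$ independent draws and combining a bounded-differences concentration inequality with a direct bound on its expectation. Since $\cD_l$ and $\cD_u$ are drawn from the same underlying distribution, the population MMD vanishes, so both deviations (the fluctuation of the empirical MMD around its mean, and the mean itself) can be controlled separately, each contributing one of the $\sqrt{K/n}$ or $\sqrt{K/m}$ terms and the $\epsilon$-tail.

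First I would rewrite the quantity in its dual form $\text{MMD}[\mathcal{F}, \cD_l, \cD_u] = \sup_{\|f\|_\mathcal{H}\leq 1} \big|\frac{1}{n}\sum_{i=1}^n f(\x^l_i) - \frac{1}{m}\sum_{j=1}^m f(\x^u_j)\big|$, and verify the bounded-differences property: replacing a single labeled point changes the supremum by at most $\frac{2}{n}\sup_{\|f\|\leq 1}|f(x)| \leq \frac{2\sqrt{K}}{n}$ (by the reproducing property and Cauchy--Schwarz, using $\|k(\x,\cdot)\|_\mathcal{H}^2 = k(\x,\x) \leq K$), and analogously $\frac{2\sqrt{K}}{m}$ for a single unlabeled point. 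McDiarmid's inequality applied to all $n+m$ coordinates then yields
\begin{equation*}
\Pr\!\Big(\big|\text{MMD} - \mathbb{E}[\text{MMD}]\big| > \epsilon\Big) \leq 2\exp\!\left(-\frac{2\epsilon^2}{n(2\sqrt{K}/n)^2 + m(2\sqrt{K}/m)^2}\right) = 2\exp\!\left(-\frac{\epsilon^2 n m}{2K(n+m)}\right),
\end{equation*}
which already accounts for the factor $2$ in front of the exponential in the proposition.

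Next I would bound $\mathbb{E}[\text{MMD}]$ by a symmetrization/ghost-sample argument. Introducing i.i.d.\ copies $\x'^l_i, \x'^u_j$ and Rademacher variables $\sigma_i$, convexity of the supremum and Jensen's inequality give
\begin{equation*}
\mathbb{E}[\text{MMD}] \leq \mathbb{E}\Big\|\tfrac{1}{n}\sum_i \sigma_i k(\x^l_i,\cdot)\Big\|_\mathcal{H} + \mathbb{E}\Big\|\tfrac{1}{m}\sum_j \sigma_j k(\x^u_j,\cdot)\Big\|_\mathcal{H} \leq \sqrt{K/n} + \sqrt{K/m},
\end{equation*}
where the last step uses $\mathbb{E}\|\cdot\|_\mathcal{H} \leq (\mathbb{E}\|\cdot\|_\mathcal{H}^2)^{1/2}$ and $\mathbb{E}[\sigma_i\sigma_j k(\x_i,\x_j)] = \delta_{ij}k(\x_i,\x_i) \leq K$. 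Combining this with the concentration inequality and absorbing constants into the threshold (replacing $\sqrt{K/n}+\sqrt{K/m}+\epsilon$ by the looser $2(\sqrt{K/n}+\sqrt{K/m}+\epsilon)$ stated in the proposition) yields the claim.

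The only really delicate step is the bounded-differences constants, which determine whether the final bound has the advertised $nm/(n+m)$ scaling; everything else is a direct application of standard RKHS and symmetrization tools. The proposition itself is a mild restatement of Theorem 7 of~\cite{gretton2012kernel}, so no new ingredients are required beyond carefully tracking the constants through McDiarmid's inequality and the symmetrization step.
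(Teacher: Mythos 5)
Your proposal is correct and is, in substance, the same argument the paper relies on: the paper's entire proof is to invoke Theorem~7 of \cite{gretton2012kernel} with the two distributions taken to be identical (so the population MMD vanishes), and what you have written is exactly the proof of that theorem --- McDiarmid's inequality with bounded differences $2\sqrt{K}/n$ and $2\sqrt{K}/m$, yielding the $2\exp\left(\frac{-\epsilon^2 nm}{2K(n+m)}\right)$ tail, combined with a bound on $\mathbb{E}[\mathrm{MMD}]$. One small inaccuracy: the Rademacher symmetrization inequality you quote carries a factor of $2$, so that route actually gives $\mathbb{E}[\mathrm{MMD}]\leq 2\left(\sqrt{K/n}+\sqrt{K/m}\right)$ rather than $\sqrt{K/n}+\sqrt{K/m}$; if you want the tighter constant you can skip symmetrization and bound $\mathbb{E}[\mathrm{MMD}]\leq \mathbb{E}\|\hat\mu_l-\mu\|_{\mathcal{H}}+\mathbb{E}\|\hat\mu_u-\mu\|_{\mathcal{H}}$ directly via Jensen, using $\mathbb{E}\|\hat\mu_l-\mu\|_{\mathcal{H}}^2\leq K/n$. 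Either way, the stated threshold $2\left(\sqrt{K/n}+\sqrt{K/m}+\epsilon\right)$ has enough slack to absorb the constant, so your conclusion stands.
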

\begin{proof}
The proof can be derived with Theorem 7 in \cite{gretton2012kernel} by assuming the two distributions $p$ and $q$ are identical.

\vspace{-10pt}	
\end{proof}

\begin{figure}
	\centering
	{\includegraphics[width=0.85\linewidth]{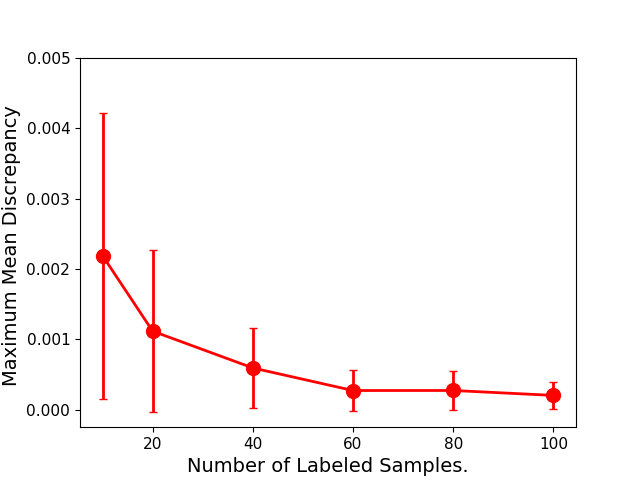}}
	\caption{MMD between labeled and unlabeled samples in two-moon example with varying number of labeled samples. Number of unlabeled sample is fixed as $1,000$.}
	\label{fig:twomoonsample}

\vspace{-10pt}	
\end{figure}

In semi-supervised learning, the number of labeled samples $n$ is usually small, which would lead to a notable empirical distribution difference with the unlabeled samples as stated in above proposition. Specifically, we illustrate the sampling bias problem with the two-moon data in the semi-supervised learning scenario in Figure~\ref{fig:twomoonsample}. We plot the MMD between labeled and unlabeled samples with regard to different numbers of labeled samples. As shown in the figure, when the sample size of labeled data is small, both the mean and variance of MMD are large, and the MMD tends to be minor only when $n$ becomes sufficiently large. 

This implies that in SSL the small sampling size often causes the empirical approximation of labeled data deviates from the true sample distribution. Consequentially, a model trained from this empirical distribution is unlikely to generalize well on the test data. While various strategies have been exploited for utilizing the unlabeled data in conventional SSL methods~\cite{lee2013pseudo, cicek2018saas, Chen_2018_ECCV}, the empirical distribution mismatch issue was rarely discussed, which is one of the hidden factors of potentially unstable problem for conventional SSL methods. This was also verified by the recent work \cite{odena2018realistic}, which shows that the performance of SSL methods could be degraded when the size of labeled dataset is decreased. 

\subsection{Healing the Empirical Distribution Mismatch}
To overcome the empirical distribution mismatch issue in SSL, in this work, we propose an augmented distribution alignment approach. In addition to training the classifier with supervision from labeled data, we also simultaneously minimize the distribution divergence between labeled and unlabeled data, such that the empirical distributions of labeled and unlabeled samples are well aligned in the latent space (as illustrated in upper right of Figure~\ref{fig:distmis}). 

Formally, let us denote the loss function as $\ell(f(\x^l_i), y_i)$ where $f$ is the classifier to be learnt. We also define $\Omega(\cD_l, \cD_u)$ as the distribution divergence of labeled and unlabeled data measured with certain metric. Then, our main idea can be formulated as the following objective,
\begin{equation}
\min_{f} \sum_{i=1}^n\ell(f(\x^l_i), y_i) + \gamma\Omega(\cD_l, \cD_u), \label{eq:ada_obj}
\end{equation}
where $\gamma$ is a trade-off parameter to balance two terms. 

An issue with the above solution is that the small number of labeled samples (\ie, $n$) potentially makes the optimization of (\ref{eq:ada_obj}) unstable. To address this issue, we further propose a data augmentation strategy. Inspired by the recent mixup approach for supervised learning, we iteratively generate new training samples by interpolating between the labeled samples and unlabeled samples, and feed them for both learning the classifier and reducing the empirical distribution divergence. We refer to our approach as Augmented Distribution Alignment, and detail it in the following section.

\section{Augmented Distribution Alignment for SSL}
In this section, we introduce our augmented distribution alignment method for SSL, in which we respectively propose two strategies, 
\emph{adversarial distribution alignment} and \emph{cross-set sample augmentation}, to tackle the empirical distribution mismatch and the small sample issues. 

\subsection{Adversarial Distribution Alignment}
\label{sec:ada}

We employ $\mathcal{H}$-Divergence~\cite{ben2010theory, cortes2011domain} to measure distribution divergence $\Omega$ as inspired by recent domain adaptation works. 
 
In particular, let us denote by $g(\cdot)$ a feature extractor (\eg, convolutional layers) which maps sample $\x$ into a latent feature space. Moreover, let $h:g(\x)\rightarrow\{0,1\}$ be a binary discriminator which predicts 0 for labeled samples and 1 for unlabeled samples. The $\cH$-Divergence between labeled and unlabeled samples can be written as:
\begin{eqnarray}
{d_\mathcal{H}}(\cD_l, \cD_u) \!\!= \!\!2 \left\{1 - \min_{h\in\mathcal{H}}\left[err(h, g, \cD_l)+err(h, g, \cD_u)\right]\right\}\nonumber,
\end{eqnarray}
where $err(h, g, \cD_l) = \frac{1}{n}\sum_{\x^l}[h(g(\x^l))\neq 0]$ is the prediction error of the discriminator $h$ on labeled samples, and $err(h, g, \cD_u)$ is similarly defined for unlabeled samples.

Intuitively, when the empirical distribution mismatch is large, the discriminator could easily distinguish the labeled and unlabeled samples, thus its prediction errors would be small, and the $\cH$-divergence is higher, and vice versa. Therefore, to reduce the empirical distribution mismatch of labeled and unlabeled samples, we then  minimize the distribution distance $d_\mathcal{H}(\cD_l, \cD_u)$ to enforce the feature extractor $g$ to generate a latent space in which two sets of features are well aligned. This is therefore achieved by solving the following problem:
\begin{eqnarray}
\min_{g}{d_\mathcal{H}}(\cD_l, \cD_u) \!\!= \!\!\max_g\min_{h\in\mathcal{H}}\left[err(h, g, \cD_l)+err(h, g, \cD_u)\right] \!\! \!\!. \nonumber
\end{eqnarray}

The above max-min problem can be optimized with the adversarial training methods. In~\cite{ganin2016domain}, Ganin and Lempitsky showed that it can be implemetned as a simple gradient reverse layer (GRL) which automatically reverse the gradient after discriminator, thus one can directly minimize the classification loss of the discriminator $h$ with the standard propagation optimization library.

\begin{figure*}
	\centering
	\includegraphics[width=0.9\linewidth]{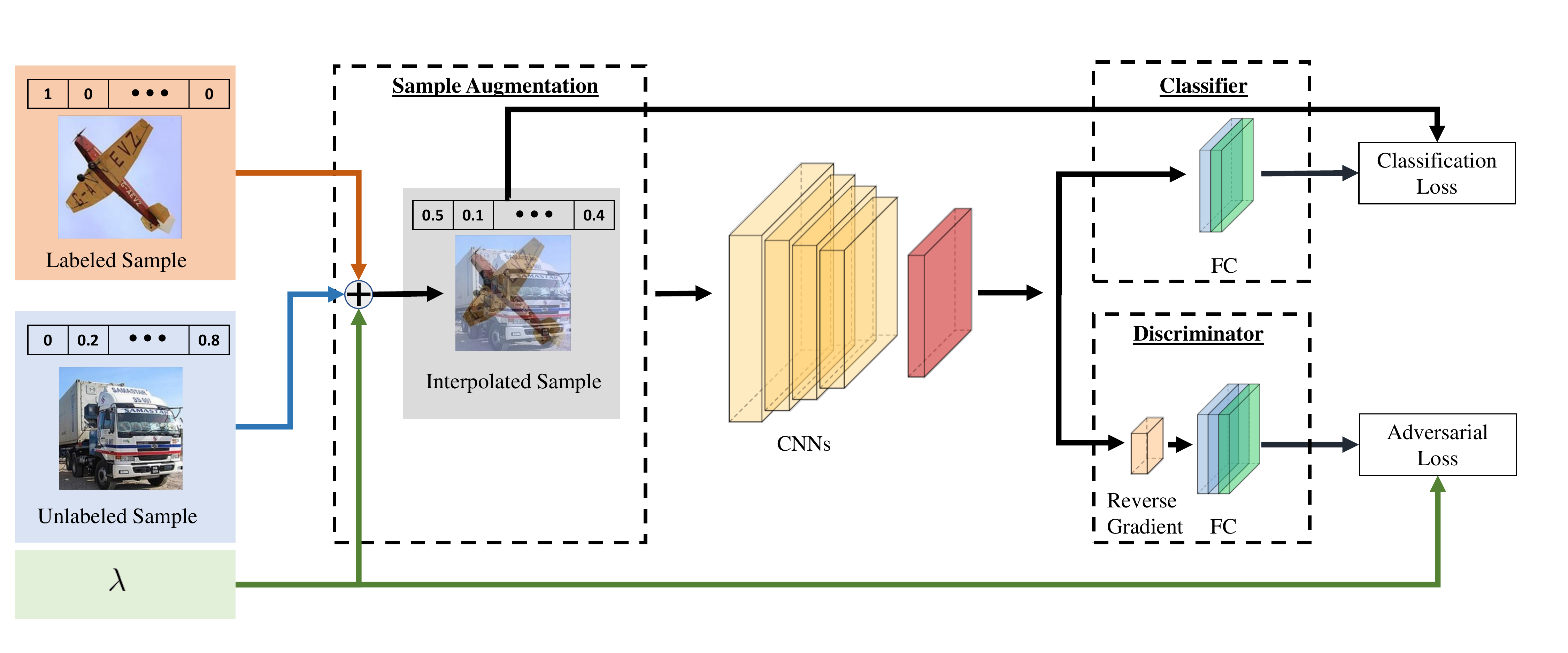}
	\caption{The network architecture of our proposed ADA-Net, in which we append an additional discriminator classifier branch with a gadient reverse layer to the vanilla CNN (shown in the bottom right part). In training time, the cross-set sample interpolation is performed between labeled and unlabeled samples, and we feed the interpolated samples into the network. Pesudo-labels of unlabeled samples are obtained using the classifier trained in last iteration (see explanation in Section~\ref{sec:summary}) for details.}
	\label{damain}
\end{figure*}

\subsection{Cross-set Sample Augmentation} 
As discussed in Section 3, in SSL, the limited sampling size of labeled data often causes unstable in optimization and leads to performance degradation.  In order to reinforce the alignment, as inspired by \cite{zhang2017mixup}, we propose to generate new training samples by interpolating between labeled and unlabeled samples. In particular, for each $\x^u$, we assign it a pseudo-label $\hat{y}^u$, which is generated by using the prediction from the model trained in previous iteration in this work. Then, given a labeled sample $\x^l$ and an unlabeled sample $\x^u$, the interpolated sample can be represented as,
\begin{eqnarray}
\tilde{\x} &=& \lambda \x^l + (1-\lambda) \x^u,  \label{eqn:interp_x}\\
\tilde{y} &=& \lambda y^l + (1-\lambda) \hat{y}^u,\label{eqn:interp_y}\\
\tilde{z} &=& \lambda \cdot 0 + (1-\lambda)\cdot1\label{eqn:interp_z},
\end{eqnarray}
where $\lambda$ is a random variable that is generated from an prior $\beta$ distribution, \ie $ \lambda \sim \beta(\alpha, \alpha)$  with $\alpha$ being a hyperparameter to control the shape of the $\beta$ distribution, $\tilde{\x}$ is the interpolated sample, $\tilde{y}$ is its class label, and $\tilde{z}$ is its label for the distribution discriminator.

The benefits of such cross-set sample augmentation are two-fold. First, the interpolated samples greatly enlarged the training data set, making the learning process more stable, especially for deep neural networks models. It was also shown in \cite{zhang2017mixup} that such data augmentation helps to improve model robustness.

Second, each pseudo-sample is generated by interpolating between a labeled sample and an unlabeled sample, thus the distribution of pseudo-samples is expected to be closer to the real distribution than that of the original labeled training samples. We prove this using the euclidean generalized energy distance~\cite{szekely2013energy} in below.

Let us denote $P_l$ and $P_u$ as the empirical distributions of labeled and unlabeled data, their euclidean generalized energy distance~\cite{szekely2013energy} can be written as,
\begin{eqnarray}
\!\!J^2(P_l,P_u)\!= \!\mathbb{E}[\|\x^l-\x^u\|^2]\!\!-\!\!\mathbb{E}[\|\x^l\!\!-{\x^l}'\|^2\!\!-\!\!\mathbb{E}[\|\x^u-{\x^u}'\|^2]\nonumber.
\end{eqnarray}
where $\|\cdot\|$ is the euclidean distance, $\x^l$ and ${\x^l}'$ (\resp, $\x^u$ and ${\x^u}'$) are two samples independent sampled from $P_l$ (\resp, $P_u$). Then, we show that cross-set sample augmentation helps to bridge the gap between two distributions by the following proposition, 
\begin{prop}
Let $\tilde{P}$ be the empirical distribution of the pseudo sample $\tilde{\x}$ generated using (\ref{eqn:interp_x}), then we have $J^2(\tilde{P},P_u) = \frac{1}{4}J^2(P_l,P_u)$. In other words, the euclidean generalized energy distance between the empirical distribution of the pseudo and unlabeled samples is smaller or equal than that of labeled and unlabeled samples. 
\end{prop}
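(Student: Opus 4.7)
The plan is to exploit a special feature of the generalized energy distance with squared Euclidean distances, namely that it depends on the two distributions only through their means. Once this reduction is in hand, the proposition follows from an elementary expectation calculation using the symmetry of the $\beta(\alpha,\alpha)$ prior.

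First, I would establish the identity $J^2(P, Q) = 2\|\mu_P - \mu_Q\|^2$ for any two distributions $P, Q$ with finite second moments, where $\mu_P$ and $\mu_Q$ denote their means. Expanding via $\|x-y\|^2 = \|x\|^2 + \|y\|^2 - 2\langle x, y\rangle$ and taking expectations, independence of the two copies and of $X, Y$ gives
\[
\mathbb{E}\|X-Y\|^2 = \mathbb{E}\|X\|^2 + \mathbb{E}\|Y\|^2 - 2\langle \mu_P, \mu_Q\rangle,
\]
\[
\mathbb{E}\|X-X'\|^2 = 2\mathbb{E}\|X\|^2 - 2\|\mu_P\|^2,
\]
and analogously for $\mathbb{E}\|Y-Y'\|^2$. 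Substituting into the formula for $J^2$ (with its natural factor of $2$ on the cross term, as in the standard energy distance), all second-moment terms cancel and the remainder collapses to $2\|\mu_P\|^2 + 2\|\mu_Q\|^2 - 4\langle \mu_P, \mu_Q\rangle = 2\|\mu_P - \mu_Q\|^2$.

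Second, I compute $\mathbb{E}[\tilde{\x}]$. Since $\lambda \sim \beta(\alpha,\alpha)$ is independent of $(\x^l, \x^u)$ and the symmetric beta satisfies $\mathbb{E}[\lambda] = 1/2$, linearity gives $\mathbb{E}[\tilde{\x}] = \tfrac{1}{2}\mu_l + \tfrac{1}{2}\mu_u$, hence $\mathbb{E}[\tilde{\x}] - \mu_u = \tfrac{1}{2}(\mu_l - \mu_u)$. Applying the identity from the first step to the pair $(\tilde{P}, P_u)$ then yields
\[
J^2(\tilde{P}, P_u) = 2\|\mathbb{E}[\tilde{\x}] - \mu_u\|^2 = \tfrac{1}{2}\|\mu_l - \mu_u\|^2 = \tfrac{1}{4}J^2(P_l, P_u),
\]
and the inequality $J^2(\tilde{P}, P_u) \leq J^2(P_l, P_u)$ is immediate since $1/4 \leq 1$.

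The main obstacle is conceptual rather than technical: recognizing that the energy distance built from \emph{squared} Euclidean distances (as written in the proposition) is a degenerate quantity detecting only first moments. Once that reduction is available, higher moments of $\lambda$ play no role---only $\mathbb{E}[\lambda] = 1/2$ is used, which is why the factor $1/4$ appears independently of the choice of $\alpha$. Had the proposition used the raw Euclidean distance as in the original energy distance, the argument would require significantly more care and a clean closed-form factor would be unlikely to appear.
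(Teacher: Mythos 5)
Your proposal is correct and follows essentially the same route as the paper: both reduce the squared-Euclidean energy distance to $2\|\mu_P-\mu_Q\|^2$ and then use $\mathbb{E}[\lambda]=1/2$ to get $\mathbb{E}[\tilde{\x}]-\mu_u=\tfrac{1}{2}(\mu_l-\mu_u)$ and hence the factor $1/4$. The only difference is that you derive the mean-reduction identity from first principles (correctly restoring the factor of $2$ on the cross term, which is garbled in the paper's displayed definition), whereas the paper simply cites Proposition 2 of Sz\'ekely and Rizzo for it.
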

\begin{proof}
Using Proposition 2 from~\cite{szekely2013energy}, we rewrite the energy distance $J^{2}(P_l ,P_u)$ as follows,
$$J^{2}(P_l ,P_u ) =   2\|\mathbb{E}[\x^l] - \mathbb{E}[\x^u]\|^2$$

In addition, we have 
\begin{align}   
       \mathbb{E}[\lambda \x^l+ (1-\lambda) \x^u] 
    &=  \frac{1}{2}\mathbb{E}[\x^l] + \frac{1}{2} \mathbb{E}[\x^u],\nonumber
\end{align}
because the expectation of $\lambda\sim\beta(\alpha,\alpha)$ is $0.5$, and the same applies to $1 - \lambda$. Therefore,
\begin{align}   
J^{2}(\tilde{P}, P_u) 
&=  2\|\frac{1}{2}\mathbb{E}[\x^l] + \frac{1}{2} \mathbb{E}[\x^u] - \mathbb{E}[\x^u]\|^2 \nonumber   \\  
&= 2 \|\frac{1}{2}\mathbb{E}[\x^l] - \frac{1}{2} \mathbb{E}[\x^u]\|^2  \nonumber\\
&= \frac{1}{4} J^{2}(P_l, P_u)\nonumber
\end{align}
Here we complete the proof.
\end{proof}
This implies that the new generated pseudo-samples can be deemed as being sampled from the intermediate distributions between the empirical distributions of labeled and unlabeled samples. As shown in previous domain adaptation works~\cite{gopalan2011domain,gong2012geodesic}, such intermediate distributions are beneficial to alleviate the gap between two distributions, and learn more robust models. 

\subsection{Summary}
\label{sec:summary}
\begin{algorithm}[t]
\SetAlgoLined
\SetKwInOut{Input}{Input}
\SetKwInOut{Output}{Output}
\caption{A training step for ADA-Net.}
\label{algo:alg}
\Input{A batch of labeled samples \{$(\x^l, y^l), \ldots\}$,  a batch of unlabeled samples $\{\x^u,\ldots\}$, classifier $f$ and discriminator $h$.}
\begin{enumerate}
  \item Run one forward step to get pseudo-labels for unlabeled samples, \ie, $\hat{y}^{u} \xleftarrow{} f(\x^{u})$
  \item Sample $\lambda$ of batch size from $\beta(\alpha, \alpha)$, and generate a batch of samples $\{(\tilde{\x}, \tilde{y}, \tilde{z}), \ldots\}$ using (\ref{eqn:interp_x}),(\ref{eqn:interp_y}),(\ref{eqn:interp_z}).
  \item Perform a forward pass by feeding $\{(\tilde{\x}, \tilde{y}, \tilde{z}), \ldots\}$. 
  \item Perform a backward pass by minimizing (\ref{eq:ada_obj_final}).
\end{enumerate}

\Output{classifier $f$ and discriminator $h$}
\end{algorithm}

We unify the adversarial distribution alignment and cross-set sample augmentation strategies into one framework, finally leading to our augmented distribution alignment approach. 

In Figure~\ref{damain}, we demonstrate an example of incorporating our augmented distribution alignment approach into a vanilla convolutional neural networks, which is referred to as \emph{ADA-Net}. Specifically, in addition to the classification branch, we add several fully connected layers as the discriminator to distinguish labeled and unlabeled samples (\ie, $h$ discussed in Section~\ref{sec:ada}). A gradient reverse layer is added before the discriminator, which will automatically reverse the sign of gradient from the discriminator during back-propagation. Then, for each mini-batch, we use the cross-set sample augmentation strategy in   (\ref{eqn:interp_x}),(\ref{eqn:interp_y}),(\ref{eqn:interp_z}) to generate interpolated samples and labels, and use them as training data to train our ADA-Net. The objective for training the network can be obtained by replacing the training samples and $\Omega(\cdot, \cdot)$ term in (\ref{eq:ada_obj}), \ie,
\begin{equation}
\min_{f,g, h} \sum_{\tilde{\x}}\lambda\ell(f(g(\tilde{\x})), \tilde{y}) + \gamma\ell(h(g(\tilde{\x})), \tilde{z}), \label{eq:ada_obj_final}
\end{equation}
where $g, f, h$ are respectively the feature extractor, classifier, and discriminator, and $\ell(\cdot, \cdot)$ is the loss function for which we use the cross-entropy in this work. $\lambda$ is the weight for classification loss, which corresponds to the $\lambda$ for generating the interpolated sample $\tilde{\x}$ (see (\ref{eqn:interp_x})). The reason for applying this weight is as follows. The higher $\lambda$ is, the higher proportion of $\tilde{\x}$ coming from labeled set is, and we are more confident on its label $\tilde{y}$, and vice versa.

We depict the training pipeline Algorithm~\ref{algo:alg}. Aside from the simple sample interpolation, the network can be optimized with the standard propagation approaches. Therefore, our augmented distribution alignment can be easily incorporated into existing neural networks by appending a discriminator with the GRL layer, and adding the proposed cross-set sample augmentation during mini-batch data preparation. 
 
\section{Experiments}
In this section, we evaluate our proposed ADA-Net for semi-supervised learning on benchmark datasets including SVHN, and CIFAR10.  
\subsection{Experimental Setup}
\paragraph{SVHN:} The Street View House Numbers (SVHN) dataset~\cite{netzer2011reading} is a dataset consists of real-world digit photos. It includes ten classes and 73,257 training images of 32$\times$32 size. Following~\cite{miyato2018virtual}, out of the full training set, 1000 images are used with labels for supervised learning. The rest training photos are provided without labels. Random translation is the only augmentation used for this dataset.  

\paragraph{CIFAR10:} The  CIFAR10 dataset~\cite{krizhevsky2009learning} contains 10 classes, and consists of 50,000 training images as well as 10,000 test images. All images are of the size 32$\times$32. 4,000 samples from the training images are used as labeled set for our experiments, the rest are used as unlabeled samples.  
 
We use the PreAct-ResNet-18~\cite{he2016identity} as the backbone network, and implement our ADA-Net in Tensorflow based on the open source TensorPack library~\cite{wu2016tensorpack}.  For the class classifier,  a single fully connected layer is used to map the features to logits. For the domain classifier, two dense layers, each with 1,024 units, followed by another dense layer are used to produce two channels of soft domain labels. 

The batch size is set as 128. The learning rate starts from 0.1, and is divided by 10 when 50\%, and 75\% epochs are reached. The network is trained for 100 epochs in total for SVHN, and 300 epochs for CIFAR10, where one epoch is defined as one iteration over all unlabeled data. We use a momentum optimizer with 0.9 as the momentum. The following hyperparameters are used for our reported results:  weight-decay$=0.0001$, interpolation $\alpha=0.1$ for SVHN and $\alpha=1.0$ for CIFAR10. The experiments on SVHN and CIFAR10 share the exact same network and protocol.  We followed~\cite{miyato2018virtual} to use a separate validation set of 1,000 images to select $\alpha$ for all methods. 

\subsection{Experimental Results}

We summarize the classification error rates on the SVHN and CIFAR10 dataset in Table~\ref{tab:ssl-ablation}.
We include the baseline CNN model that is trained with labeled data only as a reference. To validate the effectiveness of the two modules in our ADA-Net, we also report two variants of our proposed approach. In the first variant, we do not use cross-set sample augmentation and apply the distribution alignment using original labeled and unlabeled samples. In the second variant, we remove the discriminator and perform only cross-set sample augmentation for learning the classifier.

\begin{figure*}
    \vspace{-20pt}	
	\centering
	\includegraphics[width=0.8\linewidth]{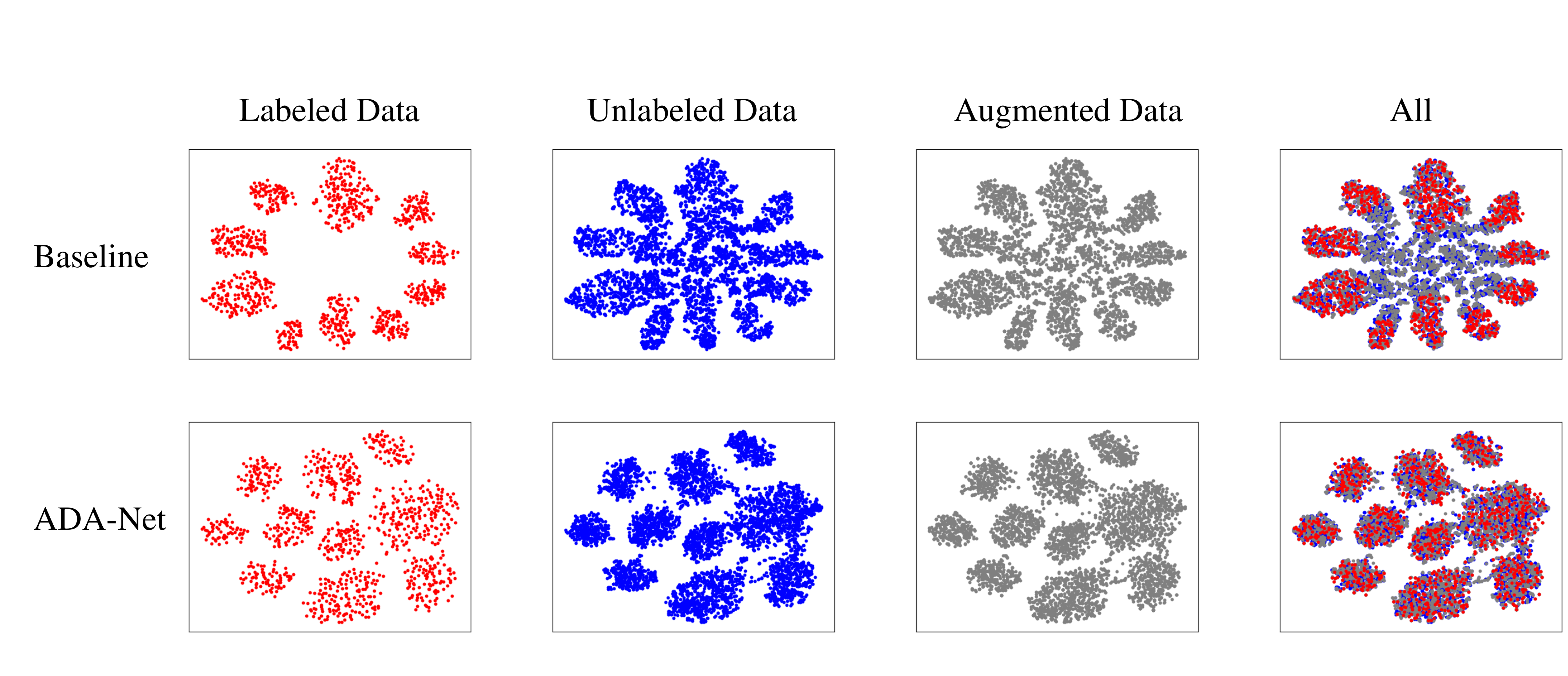}
	\caption{Visualization of SVHN features obtained by baseline CNN and our ADA-Net using t-SNE. We generated the t-SNE using labeled, unlabeled, and interpolated samples together, and show them separately for a better comparison. For baseline CNN, empirical distribution mismatch between labeled and unlabeled samples can be observed, and the augmented samples bridge the gap to some extent. For our ADA-Net, with the augmented distribution alignment, empirical distribution mismatch are well reduced.}
	\label{fig:tsne}
\end{figure*}

As shown in Table~\ref{tab:ssl-ablation}, our ADA-Net significantly improves the classification performance on both datasets. We also observe that both the distribution alignment and cross-set sample augmentation are important for improving the performance. The distribution alignment module brings 1.30\% and 3.04\% improvement on CIFAR10 and SVHN, and the cross-set sample augmentation module gives 6.18\% and 3.06\%  improvement, respectively. By integrating both modules, the classification error rates can be reduced by our ADA-Net from 19.97\% and 13.80\%  to 8.87\% and 5.90\% on the CIFAR10 and SVHN datasets, respectively. The experimental results clearly validate our motivations, and also demonstrate the effectiveness of our proposed approach.

\subsection{Experimental Analysis}
\label{sec:ea}
\paragraph{Feature visualization:} To better understand how our ADA-Net works, we use the base CNN block as a feature extractor, and visualize with the t-SNE approach for the labeled samples, unlabeled samples, and the generated pseudo-samples on the SVHN dataset in Figure~\ref{fig:tsne}. The features extracted using the baseline CNN trained with only labeled data are also visualized for comparison. As shown in Figure~\ref{fig:tsne}, a considerable distribution difference between labeled and unlabeled samples can be observed for the baseline CNN model, and the generated pseudo-samples distribute in between those two sets. Nevertheless, with our ADA-Net, the distributions of three types of samples are similar since we explicitly align the distributions of labeled and unlabeled samples in the training procedure.

\begin{table}[h!]
	\begin{center}
		\caption{Classfication error rates of our proposed ADA-Net and its variants on the CIFAR10 and SVHN datasets. ``dist" denotes the distribution alignment module, and ``aug" denotes the cross-set sample augmentation module. PreAct-ResNet-18~\cite{he2016identity} is used as the backbone network.}
		\label{tab:ssl-ablation}
		\begin{tabular}{c|c c|c|c} 
		\hlineB{3}
			 & {dist} & {aug} & {CIFAR10} & {SVHN}\\
			\hline \hline 
			Baseline & & & 19.97\% & 13.80\%  \\
			\hline
			\multirow{3}{*}{Ours} & \checkmark& & 18.67\% & 10.76\% \\
			\cline{2-5}
			& & \checkmark&   13.79\% & 10.74\%\\
			\cline{2-5}
			&\checkmark & \checkmark&   \textbf{8.87\%} & \textbf{5.90}\%\\
			\hline
		\end{tabular}
	\end{center}
\vspace{-25pt}	
\end{table}

\paragraph{Feature distribution:} To further show the effectiveness of our ADA-Net in reducing the distribution mismatch, we take the first three activations of the baseline CNN model and our ADA-Net as examples, and plot the distribution of labeled and unlabeled samples on each dimension individually. The distribution is obtained by performing kernel density estimation~\cite{parzen1962estimation, rosenblatt1956remarks} on each type of samples and each dimension individually. As shown in Figure~\ref{fig:damaind}, we again observe a considerable mismatch between the estimated empirical distribution of labeled and unlabeled samples for the baseline CNN model. And also, such distribution mismatch is then well reduced in our ADA-Net model. We have similar observations for other feature activations.

\begin{figure}[h]
    \vspace{-10pt}	
	\centering
	\includegraphics[width=0.8\linewidth]{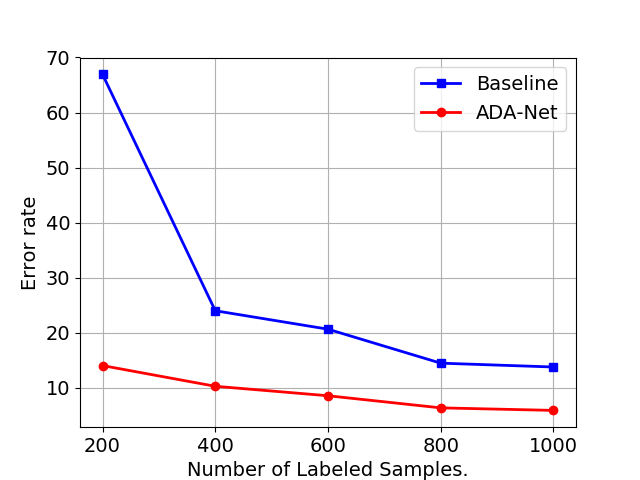}
	\caption{Classification Error rates on SVHN of our ADA-Net and baseline CNN when varying the number of labeled samples.}
	\vspace{-0.8cm}
	\label{fig:size}
\end{figure} 
\begin{figure}[t]
	\centering
	\includegraphics[width=\linewidth]{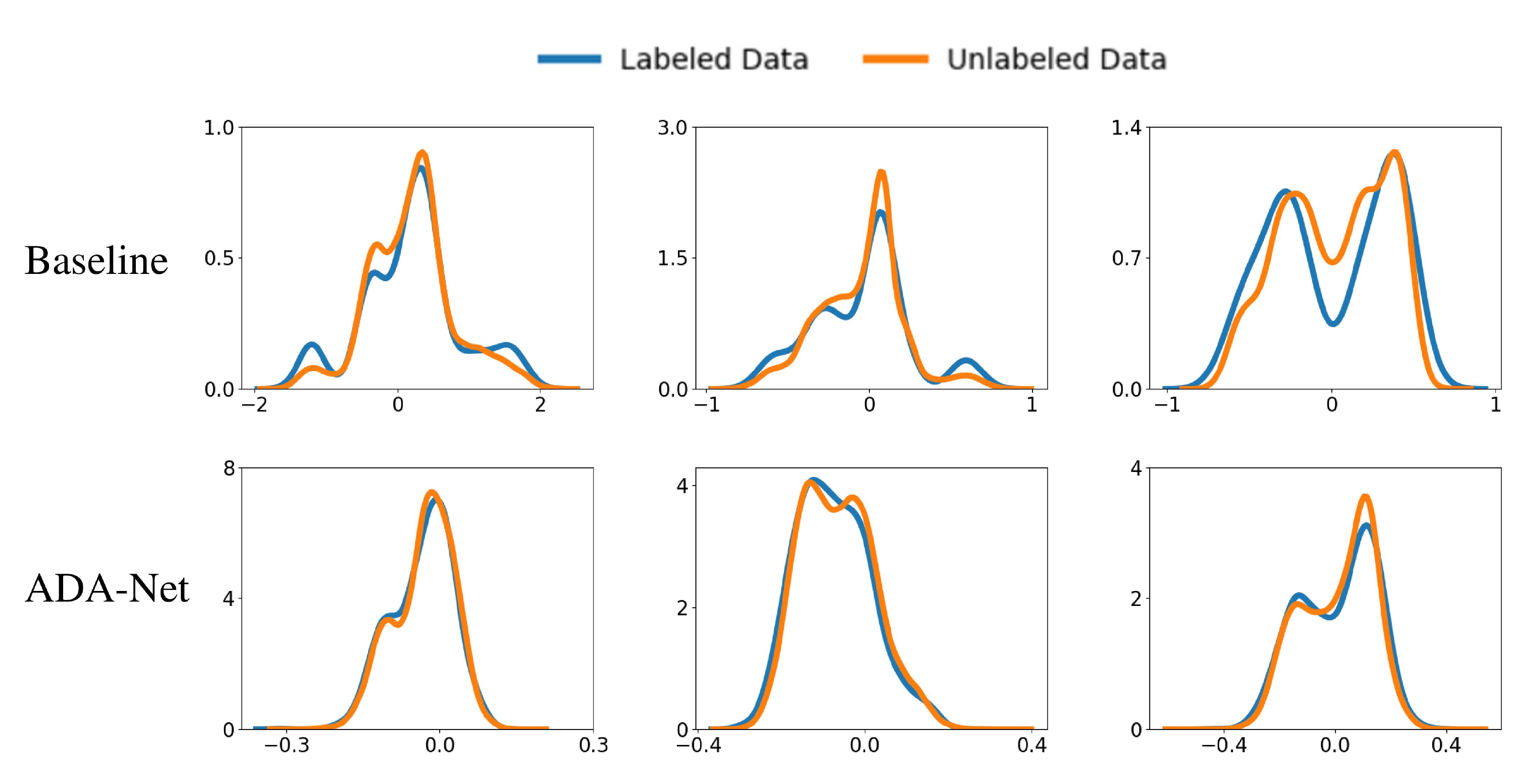}
	\caption{Kernel density estimation of labeled and unlabeled samples of the SVHN Dataset based on the first three feature activations of the baseline CNN and our ADA-Net. Considerable distribution mismatch between labeled and unlabeled data can be observed for the baseline CNN model (top row), while two distributions are generally aligned well with our ADA-Net (bottom row).}
	\label{fig:damaind}
\vspace{-10pt}	

\end{figure}

\paragraph{Varying number of labeled samples:} As discussed in Section~\ref{sec:empd}, the distribution mismatch in SSL is correlated with the number of labeled samples. It often becomes more serious when the number of labeled samples is less. To validate the effectiveness of  ADA-Net with different sampling size, we conduct experiments on the SVHN dataset by varying the number of labeled samples. In particular, we train models using $200$, $400$, $600$, $800$ and $1,000$ labeled samples, and all other experimental settings remain the same. The error rates of  ADA-Net and the baseline CNN are plotted in Figure~\ref{fig:size}. We observe that the error rate of baseline CNN model increases dramatically when reducing the number of labeled samples, which indicates that the sampling bias makes the learning problem more challenging. Nevertheless, our ADA-Net consistently improves the classification performance by alleviating such sampling bias with the augmented distribution alignment, the relative improvement is more obvious when the labeled samples are rare.

\subsection{Comparison with State-of-the-arts}

\begin{table}[h!]
	\begin{center}
		\caption{Classification error rates of different methods on CIFAR10 and SVNH. Conv-Large~\cite{ tarvainen2017mean} is used as the backbone network. Results of baseline methods are taken from the papers.}
		\label{tab:ssl}
		
\begin{threeparttable}
		\begin{tabular}{c|c|c} 
		\hlineB{3}
			\textbf{Method} & \textbf{CIFAR10} & \textbf{SVHN} \\
			\hline
			$\Pi$ Model~\cite{laine2016temporal}& 12.36\%  & 4.82\% \\
			Temporal ensembling~\cite{laine2016temporal}& 12.16\%  & 4.42\% \\
			Mean Teacher\cite{tarvainen2017mean}& 12.31\% & 3.95\%\\			
			VAT~\cite{miyato2018virtual}& 11.36\%  & 5.42\% \\
           VAT+Ent~\cite{miyato2018virtual}& 10.55\%  & 3.86\% \\			
			SaaS~\cite{cicek2018saas}& 13.22\% & 4.77\% \\
			MA-DNN~\cite{Chen_2018_ECCV}& 11.91\% & 4.21\%\\
            VAT+Ent+SNTG~\cite{luo2018smooth}& 9.89\% & 3.83\%\\
            Mean Teacher+fastSWA\tnote{*} ~\cite{athiwaratkun2018there}& 9.05\% & -\\
    		\hline
    		ADA-Net (Ours) &  10.30\% & 4.62\% \\
			ADA-Net+ (Ours) &   10.09\% & \textbf{3.54\%}\\
			ADA-Net\tnote{*} (Ours) &   \textbf{8.72\%} & - \\
			\hline 
		\end{tabular}
		
  \begin{tablenotes}
    \item[*] \scriptsize Larger translation range (4 instead of 2), and without ZCA whitening.
  \end{tablenotes}
\end{threeparttable}
    \vspace{-15pt}	
    \end{center}
	
\end{table}

\begin{table}[h!]
	\begin{center}
		\caption{Classification error rates of different methods on ImageNet dataset. ResNet-18 is used as the backbone network.}
		\label{tab:ssl-imagenet}
		\begin{tabular}{c|c|c} 
		\hlineB{3}
			\textbf{Method} & \textbf{Top-1} & \textbf{Top-5} \\
			\hline
			100\% Supervised & 30.43\%  & 10.76\% \\
			10\% Supervised & 52.23\%  &  27.54\% \\
			Mean Teacher~\cite{tarvainen2017mean}&  49.07\% & 23.59\%\\			
			Dual-View Deep Co-Training~\cite{qiao2018deep}& 46.50\% & 22.73\%\\
    		\hline
    		ADA-Net (Ours) &  \textbf{44.91}\% & \textbf{21.18}\% \\
			\hline 
		\end{tabular}
	\end{center}
	
\vspace{-10pt}	
\end{table}

We further compare our ADA-Net with recently proposed state-of-the-art SSL learning approaches~\cite{laine2016temporal,tarvainen2017mean,miyato2018virtual,miyato2018virtual,cicek2018saas,Chen_2018_ECCV,luo2018smooth, athiwaratkun2018there}. As discussed in \cite{odena2018realistic}, minor modification in the network structure and data processing method often lead to different results. To ensure a fair comparison, we take the VAT method~\cite{miyato2018virtual} as a reference, and  strictly follow their experimental setup. In particular, we re-implement our ADA-Net based on the released codes from~\cite{miyato2018virtual}. The same Conv-Large architecture and hyper-parameters are used.  

We report the results of different methods on the CIFAR10 and SVHN datasets in Table~\ref{tab:ssl}. Our ADA-Net achieves competitive results with those state-of-the-art SSL methods. Despite the simplicity of our augmented distribution alignment, the results clearly validate the importance on dealing with the empirical distribution mismatch in the semi-supervised learning, and also demonstrates the effectiveness of our ADA-Net. Furthermore, by adopting a simpler augmentation setup used by~\cite{athiwaratkun2018there}, our vanilla ADA-Net approach pushes the envelope of SSL on CIFAR10, and achieves new state-of-the-art error rates of $8.72\%$.

More importantly, as we solve the SSL problem from a new perspective that was not revealed by previous works, our augmented distribution alignment strategy is generally complementary to other methods. Therefore, the performance of existing SSL methods can be boosted by incorporating the distribution alignment and cross-set sample augmented modules proposed in this work. As shown in Table~\ref{tab:ssl}, combining our ADA-Net with the VAT+Ent method (denoted as ``ADA-Net+"), we achieve new state-of-the-art error rates of $3.54\%$ on SVHN.

We additionally report our result on 1000-class ImageNet in Table~\ref{tab:ssl-imagenet}, with 10\% labels. We compare our results  with  previous state-of-the-art methods Mean Teacher~\cite{tarvainen2017mean} and Deep Co-Training~\cite{qiao2018deep}. The result of  Deep Co-Training is quoted from their paper, and the performance of Mean Teacher is from running their official implementation by~\cite{qiao2018deep}. Following \cite{qiao2018deep}, we train ResNet-18~\cite{he2016deep} for 600 epochs with a batch size of 256, and we set $\alpha=1.0$. ADA-Net performs better than both methods and outperforms Dual-View Deep Co-Training by 1.59\% on Top-1 error rate and 1.55\% on Top-5 error rate.

\section{Conclusions}
In this work, we have proposed a new semi-supervised learning method called augmented distribution alignment. In particular, we tackle the semi-supervised learning problem from a new perspective that labeled and unlabeled data often exhibits a considerable difference in terms of the empirical distribution. We therefore employed an adversarial training strategy to align the distributions of labeled and unlabeled samples when training the neural networks. A cross-set sample augmentation was further proposed to deal with the limited sampling size and bridge the distribution gap. Those two strategies can be readily unified into the existing deep neural networks, leading to our ADA-Net. Experiments on the benchmark CIFAR10 and SVHN datasets have validated the effectiveness of our approach. 

\section*{Acknowledgements}
We thank Prof. Limin Wang, Dr. Eirikur Agustsson, Haocheng Luo, Kunjin Chen for feedback and fruitful discussions.

{\small
\bibliographystyle{ieee}
\bibliography{egbib}
}

\end{document}